\documentclass[]{elsarticle}

\usepackage{amsmath,mathtools}
\usepackage{amsfonts}
\usepackage{booktabs}
\usepackage{graphicx}
\usepackage[caption=false,font=footnotesize]{subfig}
\usepackage{paralist}
\usepackage{psfrag} 
\usepackage{xcolor}
\usepackage{url}

\usepackage[caption=false,font=footnotesize]{subfig} 
\captionsetup{subrefformat=parens}

\graphicspath{{./Figures/}} 

\DeclareMathOperator{\ev}{E}

\renewcommand{\imath}{j}
\newcommand{\noisy}[1]{#1^{\epsilon}}
\newcommand{\numberthis}{\addtocounter{equation}{1}\tag{\theequation}}

\newcommand{\He}{\mathit{He}} 

\newcommand{%
\normalsize\input{}\includegraphics[]{}
}[2][]{%
\normalsize\input{#2}\includegraphics[#1]{#2}
}

\newtheorem{theorem}{Theorem}
\newtheorem{lemma}{Lemma}
\newproof{proof}{Proof}


\journal{}

\begin{document}

\begin{frontmatter}

\title{Rotation Invariant Angular Descriptor Via A Bandlimited Gaussian-like Kernel}

\author{Michael T. McCann, Matthew Fickus, Jelena Kova\v{c}evi\'c}





\begin{abstract}
We present a new smooth, Gaussian-like kernel that allows the kernel density estimate for an angular distribution to be exactly represented by a finite number of its Fourier series coefficients.
Distributions of angular quantities, such as gradients, are a central part of several state-of-the-art image processing algorithms, but these distributions are usually described via histograms and therefore lack rotation invariance due to binning artifacts.
Replacing histograming with kernel density estimation removes these binning artifacts and can provide a finite-dimensional descriptor of the distribution, provided that the kernel is selected to be bandlimited.
In this paper, we present a new band-limited kernel that has the added advantage of being Gaussian-like in the angular domain.
We then show that it compares favorably to gradient histograms for patch matching, person detection, and texture segmentation.
\end{abstract}

\begin{keyword}
kernel density estimation \sep angular distribution \sep patch descriptor \sep person detection \sep segmentation

\end{keyword}

\end{frontmatter}

\setlength\arraycolsep{2pt} 

\section{Introduction}
Histograms of angular quantities are a key component of many of the most successful algorithms for a variety of image processing tasks.
For example, SIFT \cite{Lowe:99}, along with some of its variants including GLOH~\cite{MikolajczykS:05}, SIFT+GC~\cite{MortensenDS:05}, and CSIFT~\cite{Abdel-HakimF:06} (but not SURF~\cite{BayETV:08} or PCA-SIFT~\cite{KeS:04}), use histograms of local gradient angles to form a keypoint descriptor.
SIFT descriptors are widely used, with applications including medical image registration~\cite{SotirasDP:13}, human activity analysis~\cite{AggarwalR:11}, and object recognition~\cite{RamananN:12}.
HOG~\cite{DalalT:05} and its extensions, such as part-based models~\cite{FelzenszwalbGMR:10}, calculate local gradient histograms at every point in an image and are useful in human~\cite{GeronimoLSG:10} and object~\cite{EveringhamGWWZ:10} detection.

Despite their widespread use in vision, histograms have a fundamental weakness when estimating distributions of angular quantities because they rely on binning and are therefore not invariant to rotation.
That is, a rotation of the input angles results in a rotation of the histogram plus distortion; see Figure~\ref{fig:binning}.
This problem affects even methods that attempt to be invariant to rotation.
For example, SIFT~\cite{Lowe:99} builds histograms with respect to a dominant angle, but this dominant angle is itself estimated from an angular histogram.
The radial gradients computed in RIFF~\cite{TakacsCTCGG:13} are invariant to rotation, but are collected into angular histograms which are not.

\begin{figure}%
\centering
\includegraphics[page=1]{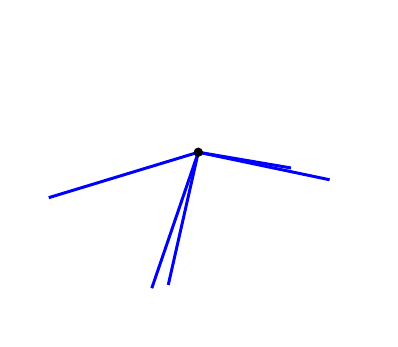}%
\includegraphics[page=2]{paper-pics_static.pdf}%
\includegraphics[page=3]{paper-pics_static.pdf}\\%
\includegraphics[page=4]{paper-pics_static.pdf}%
\includegraphics[page=5]{paper-pics_static.pdf}%
\includegraphics[page=6]{paper-pics_static.pdf}%
\caption{Top row: A weighted set of angles, its 20-bin histogram, and the estimate based on the proposed method (FS-KDE) using 20 numbers.
Bottom row: The same set of angles rotated counter-clockwise by $45^{\circ}$, its histogram, and FS-KDE.
While the rotation distorts the histogram, it only causes a corresponding rotation in the FS-KDE.}%
\label{fig:binning}%
\end{figure}

A rotation-invariant alternative to the histogram is the kernel density estimate (KDE)~\cite{Wasserman:04}, which estimates a continuous distribution from its samples by putting a lump (kernel) of density at the location of each sample.
KDEs create smooth estimates and, under certain assumptions, converge to the correct distribution with fewer samples than histograms \cite{Wasserman:04};
however, KDEs are not useful as descriptors in image processing, because evaluating the KDE at a point requires all of the samples to be stored in memory and there is no straightforward way to compute distances between KDEs.
A variant of the KDE that helps to address these limitations is characteristic function estimation~\cite{feuerverger_empirical_1977}, wherein the characteristic function (or, in the language of signal processing, Fourier series) of a distribution is estimated, rather than its angular-domain version.
For angular distributions, the estimated Fourier series is discrete and can be truncated to form a finite-length descriptor.
Reference \cite{LiuSSBPBR:2013} explores the use this type of descriptor as a replacement for histograms inside of HOG~\cite{DalalT:05}.
The problem with this truncation is that it is equivalent to convolving the angular kernel of a KDE with a sinc function.
These kernels may then have undesirable angular domain properties, such as attaining negative values or being non-monotonic on the intervals $[-\pi, 0]$ and $[0, \pi]$.
For example, since \cite{LiuSSBPBR:2013} uses a Dirac kernel in the angular domain and then truncates the Fourier series, the effective angular kernel is a sinc function. 

In this work, we present a new kernel designed to have good properties in the angular domain while simultaneously being band-limited in the frequency domain, meaning that its Fourier series has a finite number of non-zero terms and can therefore be used directly as a descriptor without truncation.



\section{Fourier Series Kernel Density Estimation}
\label{sec:theory}
We call the method of representing an angular KDE via its Fourier series the \emph{Fourier Series-Kernel Density Estimate} (FS-KDE).
In this section, we develop our notation for the FS-KDE and describe its properties.

\subsection{Definition of the FS-KDE}
\label{sec:derivation}

Given an angle-weight pair, $(\Theta, W)$, consisting of a set of angles, $\Theta = \begin{Bmatrix}\theta_0,&\theta_1, & \dots, & \theta_{N-1}\end{Bmatrix}$, and a set of positive scalar weights, $W = \begin{Bmatrix} w_0, & w_1, & \dots, & w_{N-1}\end{Bmatrix}$,  we form a KDE, $f: \begin{bmatrix}-\pi, & \pi \end{bmatrix} \rightarrow \mathbb{R}$, of their underlying distribution as a sum of kernels,
\begin{equation*}
f(\theta) = \frac{1}{N}\sum_{n = 0}^{N-1} w_n h\left(\theta - \theta_n \right),
\end{equation*}
where the kernel, $h(\theta)$, is a positive function that integrates to one.%
\footnote{For greater flexibility, we do not require $f$ to integrate to one; we thus use the term \emph{distribution} loosely.}
For example, the angle-weight pair might come from the angles and magnitudes of the gradients in an image.


We can then expand $h$ in terms of its Fourier series and rearrange terms,
\begin{align*}
	f(\theta) &= \frac{1}{N}\sum_{n = 0}^{N-1} w_n \sum_{k = -\infty}^{\infty} H_k  e^{ \imath k (\theta - \theta_n)} \\
	&\overset{(a)}{=} \frac{1}{N}\sum_{n = 0}^{N-1} w_n \sum_{k = -K}^{K} H_k  e^{ \imath k (\theta - \theta_n)} \\
		&=\sum_{k = -K}^{K}  \underbrace{\left( \frac{H_k }{N} \sum_{n = 0}^{N-1} w_n   e^{- \imath k \theta_n} \right)}_{F_k}   e^{ \imath k \theta} \numberthis \label{eq:FS-KDE_FK},
\end{align*}
where (a) holds for bandlimited kernels.
Equation \eqref{eq:FS-KDE_FK} is the expression of $f$ in terms of its Fourier series coefficients, $F_k$.
We denote the relationship between $f(\theta)$ and $F_k$ as  $f(\theta) \overset{FS}{\leftrightarrow} F_k$.
From \eqref{eq:FS-KDE_FK}, we see that $f$ is bandlimited: it has $2K+1$ non-zero Fourier series coefficients.
We also see that $F_{-k}$ is the complex conjugate of $F_{k}$, so, in practice, only $K+1$ complex values must be computed and stored to represent $f$.
Thus, an FS-KDE of order $K$ takes the same amount of storage as a histogram with $2(K+1)$ bins.

\subsection{Properties}
\label{sec:props}
We now discuss some useful properties of the FS-KDE.
First, since $F$ is simply the Fourier series representation of $f$, we can leverage all of the properties of the Fourier series~\cite{VetterliKG:12}.
Of specific interest here are linearity, $\alpha f(\theta) + \beta g(\theta) \overset{FS}{\longleftrightarrow}  \alpha F_k + \beta G_k$, and  Parseval's equality, $||f||^2 = 2\pi ||F||^2$.
Together, these mean that the distance between two FS-KDEs, $||f-g||^2$, can be computed as the finite sum $2 \pi || F - G ||^2$.

The FS-KDE is rotation invariant in the sense that a rotation of the angles in the angle-weight pair results in a corresponding rotation in the FS-KDE.
To be more precise, begin with an angle-weight pair $(\Theta, W)$.
Form its rotation, $(\Theta_\phi, W)$, where $\Theta_\phi = \begin{Bmatrix}\theta_0 + \phi, & \theta_1+ \phi, & \dots, & \theta_{N-1} + \phi\end{Bmatrix}$.
If $F$ is the FS-KDE for $(\Theta, W$) and $F_\phi$ is the FS-KDE for $(\Theta_\phi, W)$, then from \eqref{eq:FS-KDE_FK},
\begin{align*}
F_{\phi, k} &= \frac{H_k}{N}  \sum_{n = 0}^{N-1} w_n   e^{- \imath k (\theta_n + \phi)} \\
&= e^{-\imath k \phi} \frac{H_k}{N} \sum_{n = 0}^{N-1} w_n e^{-\imath k \theta_n}   \\
&= e^{-\imath k \phi} F_{K}. \numberthis \label{eq:rotatedF_k}
\end{align*}
By the shift in time property of Fourier series, \eqref{eq:rotatedF_k} means that $f_\phi$ is equal to $f$ circularly shifted by $\phi$.
Thus, a rotation in the input angles has caused a corresponding rotation in the KDE.


		
\subsection{FS-KDEs for Images}
Several computer vision algorithms estimate local angular distributions (usually via histograms) for every location in an image.
For example, this is the approach of deformable parts models~\cite{FelzenszwalbGMR:10} for object detection.
In this section, we describe efficient computation of local FS-KDE estimates on images via linear filtering.

Let $(\Theta(x), W(x))$ be a weighted angular image, where $\Theta : X \rightarrow [-\pi, \pi]$ is an image of angles and $ W: X \rightarrow \mathbb{R}$ is a corresponding image of weights, where $X$ is a discrete set of pixel locations (e.g. $\mathbb{Z}_{1200} \times \mathbb{Z}_{1600}$).
For example, $(\Theta(x), W(x))$ may be formed from computing the gradient of an intensity image.
(Note that we write the argument $x$ in $(\Theta(x), W(x))$ to make a distinction between the weighted angular image and the angle-weight pair we introduced in Section~\ref{sec:derivation}.)

We aim to compute a KDE around each point $x \in X$, with a neighborhood defined by $\varphi$, a positive window function with $||\varphi||_1 = 1$.
We define the FS-KDE for $(\Theta(x), W(x))$ at location a $x$ and angle $\theta$ as
\begin{equation*}
f(x, \theta) = \sum_{y\in X} W(y) h\left(\theta - \Theta(y)\right)  \varphi(x - y),
\end{equation*}
Then, following a similar procedure as in Section~\ref{sec:derivation}, we arrive at an expression for $f(x, \theta)$ in terms of its Fourier series coefficients,
\begin{equation*}
f(x, \theta) = \sum_{k=-K}^{K} F_k(x) e^{\imath k \theta},
\end{equation*}
where
$F_k(x) = H_k \left( W(\cdot) e^{-\imath k  \Theta(\cdot) }  \ast \varphi \right) (x)$,
the symbol $\ast$ denotes discrete time convolution, and $W(\cdot) e^{-\imath k \Theta(\cdot)}$ is computed pointwise.
This means that local FS-KDEs of order $K$ can be computed via $K+1$ complex filtering operations. 


\section{Bandlimited Gaussian-like Kernel}
In this section we present a new Guassian-like kernel for use in FS-KDEs, which we call the $\cos^{2K}$ kernel.

\subsection{Kernel Selection}
\label{sec:kernelSel}
Any kernel that has a bandlimited Fourier series can be used to form an FS-KDE using \eqref{eq:FS-KDE_FK},
however we argue for the following additional requirements to make the kernel reasonable for estimation of angular distributions:
\begin{inparaenum}[(1)]
\item the kernel should be real; \label{req:real}
\item the kernel should be non-negative; \label{req:pos}
\item the kernel should be an even function;\label{req:even}
\item the kernel should integrate to one; and\label{req:one}
\item the kernel should take the value zero at $\pi$. \label{req:zero}
\end{inparaenum}

We propose the $\cos^{2K}$ kernel, 
\begin{equation}
h(\theta) = C_K \cos^{2K} \left(\frac{\theta}{2} \right), \label{eq:BGLK}
\end{equation}
where $C_K$ is a normalizing constant and $K$, which we call the \emph{order}, controls the width of the kernel (Figure~\ref{fig:kernels}).
The $\cos^{2K}$ kernel clearly satisfies requirements \ref{req:real}-\ref{req:zero} above.
We will now show that the it is also bandlimited and Guassian-like.

\begin{figure}
\includegraphics[page=7]{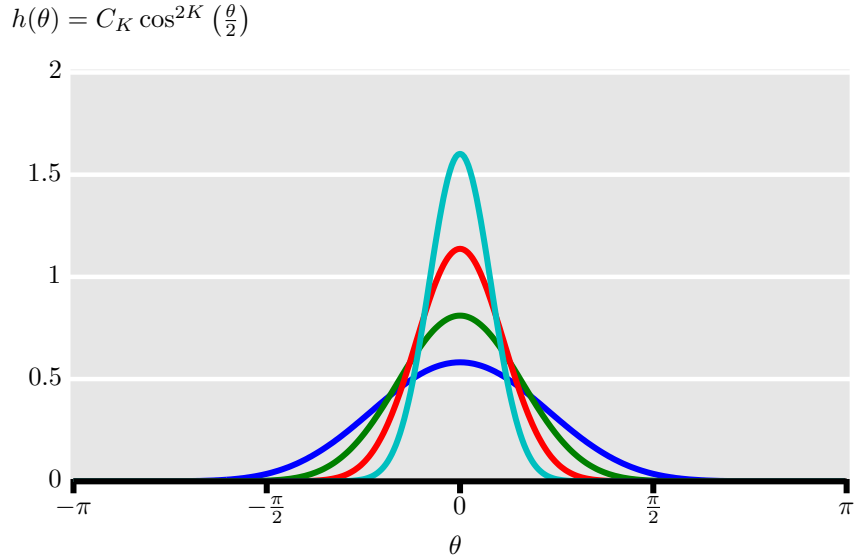}%
\caption{Examples of the proposed $\cos^{2K}$ kernel kernel for $K =$ 4, 8, 16, and 32, with $K=32$ being the narrowest of the kernels shown. 
Each kernel is $2\pi$-periodic and integrates to one.
As $K$ increases, the kernels become sharper.}
\label{fig:kernels}
\end{figure}

{\bf Bandlimited.} 
We can rearrange \eqref{eq:BGLK} to reveal its Fourier series coefficients,

\begin{align*}
h(\theta) &\overset{(a)}{=} C_K \left( \frac{e^{ \frac{\imath}{2}\theta} + e^{-\frac{\imath}{2}\theta}}{2} \right)^{2K} \\
&\overset{(b)}{=} \frac{C_K}{2^{2K}} \sum_{p=0}^{2K} \binom{2K}{p} e^{\frac{\imath}{2} p \theta} e^{-\frac{\imath}{2}(2K-p)\theta}  \\
&\overset{(c)}{=} \sum_{p=0}^{2K} \frac{C_K}{2^{2K}} \binom{2K}{p} e^{\imath (p - K) \theta} \\
&\overset{(d)}{=} \sum_{k=-K}^{K} \underbrace{\left( \frac{C_K}{2^{2K}}  \binom{2K}{K+k} \right) }_{H_k} e^{ \imath k \theta}  \numberthis \label{eq:BGLK_HK}
\end{align*} 
where $(a)$ follows from Euler's formula; $(b)$ from the binomial theorem; $(c)$ from an interchange of finite sums and combining the exponents; and $(d)$ from the substitution $k = p - K$.
This expression for $h(\theta)$ confirms that it is bandlimited.


{\bf Gaussian-like.}
The Gaussian kernel,
\begin{equation*}
  g(x) = \frac{1}{\sqrt{2\pi\sigma^2}} e^{ -\frac{x^2}{2\sigma^2} }.
\end{equation*}
is ubiquitous, but does not work as an FS-KDE kernel because it is neither bandlimited nor defined on a circular domain.%
\footnote{There are several adaptations of the Gaussian to a circular domain, including the von Mises and circularly extended Gaussian, neither of which are bandlimited and so are not suitable FS-KDE kernels.}
We will show that the $\cos^{2K}$ is Gaussian-like in that its derivatives behave in a similar way.
The derivatives of the Gaussian are
\begin{equation*}
  g^{(n)}(x) = \frac{d^n}{(dx)^n} g(x) = (-1)^n \He_n \left( \frac{x}{\sigma} \right) g(x),
\end{equation*}
where $\He_n(x)$ is the $n$th order Hermite polynomial in $x$. 
This is useful because we know that $\He_n(x)$ has $n$ real roots, each with multiplicity one.
Therefore, $g^{(n)}(x) = 0$ for $n$ values of $x$ (and also tends to zero as $|x|$ tends to infinity).

We want to show that the $\cos^{2K}$ kernel  \eqref{eq:BGLK} is Gaussian-like, i.e. $h^{(n)}(x) = 0$ for $n$ values of $\theta \ne \pi$ (our logic for ignore zeros at $\pi$ is by the analogy $\lim_{x \to \infty} g(x) \sim h(\pi)$).
By the definition of the Fourier series and the fact that $h(\theta)$ is bandlimited, we have
\begin{equation*}
  h(\theta) = \sum_{k = -K}^{K} H_k e^{\imath k \theta}.
\end{equation*}
We know that the $H_k$ are real and $H_k = H_{-k}$ because $h(\theta)$ is even and real.
Thus $h$ is just a polynomial of degree $2K$, $h(\theta) = e^{-\imath K \theta}P(e^{\imath k \theta})$.
By the fundamental theorem of algebra, the number of roots of $P(z)$ is $2K$; if we call those roots $z_0$, $z_1$, \dots, $z_{2K-1}$, then the zeros of $h$ correspond to the unit-modulus $z_i$s: $h(\arg(z_i)) = 0$ for each $z_i$ with $|z_i| = 1$,
where we use $\arg (z)$ to denote the argument of the complex number $z$. 

The derivatives of $h(\theta)$ are
\begin{equation*}
  h^{(n)}(\theta) = \frac{d^n}{(d\theta)^n} h(\theta) = \sum_{k = -K}^{K} (\imath k)^n H_k e^{\imath k \theta},
\end{equation*}
These derivatives are also polynomials of degree $2K$, which we call $P^{(n)}$, with the same relationship between the roots of the polynomial and the zeros of $h^{(n)}$ as for $h$ and $P$.

We see from \eqref{eq:BGLK} that all of the zeros of $h$ are at $\theta = \pi$, meaning that $P(z)$ has a root at $z=-1$ with multiplicity $2K$.
This also means that each $P^{(n)}$ has a root at $z=-1$ with multiplicity $2K-n$ (by the chain rule).
Thus the number of zeroes of $h^{(n)}(\theta)$ for $\theta \ne \pi$ is less than or equal to $n$.

On the other hand, because $h(-\pi) = h(\pi) = 0$, the mean value theorem guarantees that there exists a $\theta_0 \in (-\pi,  \pi)$ such that $h^{(1)}(\theta_0) = 0$.
Repeating the same argument gives $\theta_{1,0} \in (-\pi, \theta_0)$ and $\theta_{1,1} \in (\theta_0, \pi)$ such that  $h^{(2)}(\theta_{1,0}) = h^{(2)}(\theta_{1,1}) = 0$ and so on for each $h^{(n)}$.
Thus the number of zeroes of $h^{(n)}(\theta)$ for $\theta \ne \pi$ is greater than or equal to $n$.

Combining the inequalities from the previous two paragraphs, we have $h^{(n)}(x) = 0$ for $n$ values of $\theta \ne \pi$.

\subsection{Practical Considerations}
In this section, we explore a few practical considerations that must be taken into account when computing FS-KDE using our $\cos^{2K}$ kernel, including calculation of $C_K$, a normal approximation to \eqref{eq:FS-KDE_FK}, and creating approximate FS-KDEs via truncation.

\subsubsection{Normalization}
We have not yet calculated $C_K$, the normalizing constant for the kernel $h$ in \eqref{eq:BGLK}.
We do this via \eqref{eq:BGLK_HK}, giving
\begin{align*}
  \int_0^{2\pi} h(\theta) d\theta &=  \int_0^{2\pi}  \sum_{k=-K}^{K}\left( \frac{C_K}{2^{2K}}  \binom{2K}{K+k} \right) e^{ \imath k \theta} d\theta\\
  &\overset{(a)}{=} \int_0^{2\pi}\frac{C_K}{2^{2K}} \binom{2K}{K}  d\theta \\
	&=  \frac{C_K}{2^{2K-1}} \binom{2K}{K}  \pi,
\end{align*}
where (a) comes from the symmetry of $e^{ \imath k \theta}$ for $k \ne 0$.
So to make the $\cos^{2K}$ kernel integrate to one, we set
\begin{equation}
C_K = \frac{2^{2K-1}}{ \binom{2K}{K}  \pi}. \label{eq:CK}
\end{equation}

Thus, the formula for Fourier series coefficients of the FS-KDE using the $\cos^{2K}$ kernel is
\begin{equation}
F_k =   \frac{(K!)^2}{2 \pi N  (K-k)!(K+k)!} \sum_{n=0}^{N-1} w_n  e^{- \imath k \theta_n}.  \numberthis \label{eq:FS-KDE_FK_weighted}
\end{equation}

\subsubsection{Normal Approximation}
For large $K$s, the binomial coefficients in \eqref{eq:FS-KDE_FK} and \eqref{eq:CK} can be replaced with a normal approximation,
\begin{equation*}
\binom{n}{k} \approx \frac{2^n}{\sqrt{n \pi / 2}} e^{-\frac{(k-(n/2))^2}{n/2}},
\end{equation*}
giving an approximate version of \eqref{eq:FS-KDE_FK_weighted},
\begin{equation}
F_k =   \frac{1}{2 \pi N } e^{-k^2/K} \sum_{n=0}^{N-1} w_n e^{- \imath k \theta_n}  \numberthis \label{eq:FS-KDE_FK_approx}. \\
\end{equation}
This approximation saves computation as compared to \eqref{eq:FS-KDE_FK_weighted} and also reveals that the $F_ks$ decay exponentially.
The quality of the normal approximation improves as $K$ increases;
in our implementation we switch from \eqref{eq:FS-KDE_FK_weighted} to \eqref{eq:FS-KDE_FK_approx} when $2K \ge 80$.

\subsubsection{Truncation}
In our current formulation, the bandwidth of the kernel density estimate is controlled by $K$, which also governs how many Fourier series terms are nonzero.
Careful inspection of Figure~\ref{fig:kernels} reveals that the $\cos^{2K}$ kernel  does not sharpen quickly as $K$ increases: a sharp kernel requires a large $K$ and therefore a long descriptor.
One way to achieve sharp kernels with a shorter descriptor is through truncation.
The approximation \eqref{eq:FS-KDE_FK_approx} reveals that when $K$ is large, the decaying exponential term will cause $|F_k|$ to be very small for $k$ near $K$.
In fact, 
\begin{equation*}
\max_{\Theta}\frac{|F_k|}{|F_0|} =  e^{-k^2/K}.
\end{equation*}
Thus, for a fixed $K$ and a small truncation threshold $\epsilon$, we create a truncated FS-KDE, $\hat{F}$, according to
\begin{equation}
\label{eq:trunc}
\hat{F}_k = \begin{cases}
0 & \text{if } e^{-k^2/K} < \epsilon,\\
F_k & \text{otherwise}.
\end{cases}
\end{equation}
In the angular domain, truncation introduces distortion into the kernel, but this distortion is slight even when many coefficients are truncated (see Figure~\ref{fig:truncKernel}).
We provide MATLAB code for the FS-KDE using the $\cos^{2K}$ kernel in the reproducible research compendium for this article, \cite{McCannFK:15:web}.

\begin{figure}
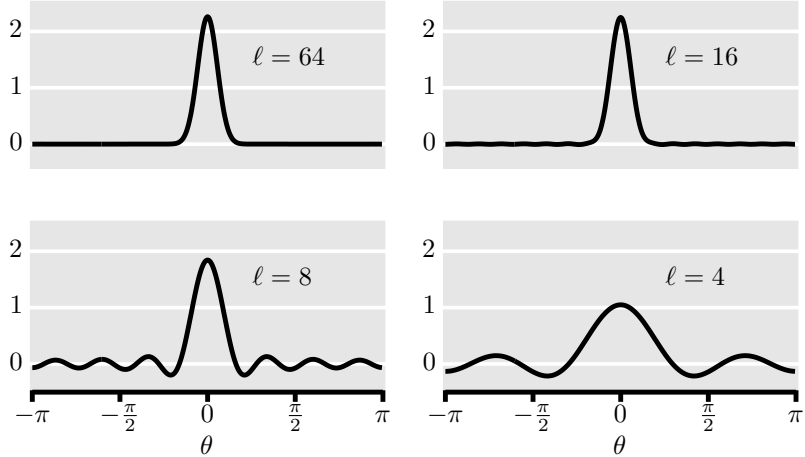

\centering
\includegraphics[page=8]{paper-pics_static.pdf}%
\includegraphics[page=9]{paper-pics_static.pdf}\\%
\includegraphics[page=10]{paper-pics_static.pdf}%
\includegraphics[page=11]{paper-pics_static.pdf}%
\caption{Examples of the $\cos^{2K}$ kernel  of order 64 with different levels of truncation, where $\ell$ is the number of non-zero coefficients.
Distortion is barely noticeable even when three quarters of the $F_k$s are set to zero (upper-right panel).}
\label{fig:truncKernel}
\end{figure}

\section{Canonicalization}
\label{sec:canonicalization}
We showed in Section~\ref{sec:props} that FS-KDEs are rotation invariant in the sense that a rotation of the input angles causes a corresponding rotation in the density estimate.
We may, however, also desire that a rotation of the input angles cause no change at all to the estimated distribution.
The would be useful if, e.g., FS-KDEs are being used as point descriptors in an image matching application. 
We can achieve this by rotating FS-KDEs to a standard, or \emph{canonical}, position, such that all FS-KDEs that are rotations of each other end up with the same canonical version.
In this section, we present two methods of achieving this canonicalization.

\subsection{$F_1$ Canonicalization}

A natural way of canonicalizing an angle-weight pair, $(\Theta, W)$, is to rotate the angles such that their mean is equal to zero.
One way to define the angular mean is to assign to each angle $\theta_n$ a complex number, $z_n = w_n e^{\imath \theta_n}$ with modulus $w_n$ and argument $\theta_n$, 
and then sum these numbers and take the argument of the result,
$\bar{\theta} = \arg \left( \sum_{n=0} ^ {N-1} z_n \right)$.
Then, the canonical angle-weight pair is $(\tilde{\Theta}, W)$, where $\tilde{\Theta} = \Theta_{-\bar{\theta}}$, which is the rotation of $\Theta$ by $-\bar{\theta}$.

From \eqref{eq:FS-KDE_FK} we see that the argument of the first Fourier series coefficient of the corresponding FS-KDE, $\arg (F_1)$, is equal to $-\bar{\theta}$ so long as $H_1$ is real and positive, as is the case for the $\cos^{2K}$ kernel.
As a result, canonicalizing the angle-weight pair causes $F_1$ to be real because $\bar{\theta} = 0$. 
Using this fact, we can directly canonicalize an FS-KDE, $f$, without having to know the angle-weight pair that it came from.
We define the canonical version of $f$ as
\begin{equation}
\tilde{f}(\theta) \overset{FS}{\longleftrightarrow} \tilde{F}_k = e^{-j k \arg (F_1) } F_k,
\label{eq:canon}
\end{equation}
which is the rotation of $f$ that makes $F_1$ real.

We now show that this canonicalization has the property that rotating a set of angles does not change its canonical FS-KDE.
In other words, all rotations of an angle-weight pair have the same canonical FS-KDE, $\tilde{f}$.

\begin{lemma}
Let $(\Theta, W)$ be an angle-weight pair, let $(\Theta_\phi, W)$ be its rotation by $\phi$, and let $F$ and $F_\phi$ be their FS-KDEs.
Then,
\begin{equation*}
\tilde{F}_{\phi,k} = \tilde{F}_k%
\end{equation*}%
\end{lemma}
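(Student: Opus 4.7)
The plan is to combine the rotation property \eqref{eq:rotatedF_k} with the definition of canonicalization \eqref{eq:canon}, showing that the phase correction $e^{-\imath k \arg(F_1)}$ exactly cancels the phase shift induced by the rotation $\phi$.

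First I would apply \eqref{eq:rotatedF_k} to obtain $F_{\phi,k} = e^{-\imath k \phi} F_k$ for every $k$, and in particular for $k=1$ to get $F_{\phi,1} = e^{-\imath \phi} F_1$. Then I would take the argument of both sides to conclude that $\arg(F_{\phi,1}) = \arg(F_1) - \phi$, treating this as an equality modulo $2\pi$ (which is harmless because it only appears inside a complex exponential).

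Next I would substitute into the definition \eqref{eq:canon} applied to the rotated pair, writing
\begin{equation*}
\tilde{F}_{\phi,k} = e^{-\imath k \arg(F_{\phi,1})} F_{\phi,k} = e^{-\imath k (\arg(F_1) - \phi)} \cdot e^{-\imath k \phi} F_k,
\end{equation*}
and then the two factors $e^{\imath k \phi}$ and $e^{-\imath k \phi}$ collapse, leaving $e^{-\imath k \arg(F_1)} F_k = \tilde{F}_k$.

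There is essentially no obstacle here; the only subtlety is the $2\pi$-ambiguity of $\arg$, which is immaterial once exponentiated by $\imath k$ for integer $k$. The argument tacitly requires $F_1 \ne 0$ so that $\arg(F_1)$ is defined, which is reasonable to assume in any non-degenerate application and is consistent with the discussion preceding the lemma (where $H_1 > 0$ for the $\cos^{2K}$ kernel ensures that $\arg(F_1)$ reflects the angular mean $-\bar\theta$).
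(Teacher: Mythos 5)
Your proposal is correct and follows essentially the same route as the paper's proof: establish $\arg(F_{\phi,1}) = \arg(F_1) - \phi$, then substitute into the canonicalization definition \eqref{eq:canon} together with the rotation property $F_{\phi,k} = e^{-\imath k \phi} F_k$ so the phases cancel. The only cosmetic difference is that the paper derives the argument identity by writing out the defining sum for $F_{\phi,1}$ explicitly, whereas you obtain it directly from \eqref{eq:rotatedF_k} at $k=1$; your added remarks on the $2\pi$-ambiguity and the implicit assumption $F_1 \ne 0$ are sensible and consistent with the paper.
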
%
\begin{proof}
Beginning with an FS-KDE \eqref{eq:FS-KDE_FK}, we have
\begin{align*}
\arg (F_{\phi,1}) &= \arg \left( \frac{H_1 }{N} \sum_{n = 0}^{N-1} w_n   e^{- \imath  (\theta_n + \phi)} \right) \\
&= \arg \left( e^{-\imath \phi} \frac{H_1 }{N} \sum_{n = 0}^{N-1} w_n   e^{- \imath \theta_n} \right) \\
&= \arg (F_1) - \phi.
\end{align*}
We know from Section~\ref{sec:props} that $F_{\phi,k} = e^{-\imath k \phi} F_k$, and thus using the definition of canonicalization \eqref{eq:canon},
\begin{align*}
\tilde{F}_{\phi,k} &= e^{- \imath k \arg (F_{\phi, 1})} F_{\phi, k} \\
&=  e^{- \imath (\arg (F_1) - \phi) k} e^{-\imath k \phi} F_k \\
&= e^{-  \imath k \arg (F_1)} F_k = \tilde{F}_k.
\end{align*}
\end{proof}

\subsection{Stability of $F_1$ Canonicalization}
Now that we have shown that $F_1$ canonicalization aligns distributions that are exact rotations of each other, we study its effect on distributions that are noisy rotations of each other.
Intuitively, a good canonicalization will give similar canonical versions to all FS-KDEs that are noisy rotations of each other;
We call this property \emph{stability}.
Conversely, a bad canonicalization might amplify small amounts of noise, assigning similar FS-KDEs very different canonical versions.
The following theorem states that the stability of $F_1$ canonicalization is related to the magnitude of the first Fourier series coefficient of the distribution that is being canonicalized, $|F_1|$.
We leave the proof of the theorem to Appendix~\ref{app:proof}.

\begin{theorem}[Stability of $F_1$ Canonicalization]
\label{thm:canon}
Let $(\Theta, W)$ be an angle-weight pair.
Without loss of generality, assume $\bar{\theta}=0$ and $\sum_{n=0}^{N-1} w_n = N$. 
Let $(\noisy{\Theta}, \alpha \noisy{W})$ be its noisy version such that $ \noisy{w}_n e^{\imath \noisy{\theta}_n}  = w_n e^{\imath \theta_n} + \epsilon_n$ where the $\epsilon_n$ are drawn according to the complex normal distribution with mean zero and standard deviation $\sigma / \sqrt{N}$ (i.e., the imaginary part of $\epsilon_n\sim \mathcal{N}(0, \sigma^2 / N)$ and the real part of $\epsilon_n \sim \mathcal{N}(0, \sigma^2 / N)$), and $\alpha$ scales $\noisy{W}$ such that $\sum_{n=0}^{N-1} \alpha\noisy{w}_n = N$.
Let $f$ and $\noisy{f}$ be the $K$th-order $\cos^{2K}$ FS-KDEs of $(\Theta, W)$ and $(\noisy{\Theta}, \noisy{W})$, respectively.
Then
\begin{multline*}
\ev\left[ ||\noisy{F} - \noisy{\tilde{F}}||\right] \le  \\
\ev\left[ \sqrt{ \sum_{k=-K}^K \left(  2 B_k N  \sin  \left( \frac{k}{2} \arctan \left( \frac{  B_1 \epsilon }{ |F_1| +  B_1 \upsilon  } \right) \right)  \right)^2 } \right],
\end{multline*}
with $\epsilon$ $\sim \mathcal{N}(0, \sigma^2)$, $\upsilon$ $\sim \mathcal{N}(0, \sigma^2)$, and $B_k$ the coefficient from \eqref{eq:FS-KDE_FK}, $B_k = \frac{C_K}{2^{2K} N} \binom{2K}{K+k}$.
\end{theorem}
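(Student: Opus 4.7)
The plan is to compute $\noisy{F}_k - \noisy{\tilde{F}}_k$ directly from the canonicalization formula \eqref{eq:canon}, separate it into a magnitude factor and a phase factor, and then track how each depends on the noise. Because $\noisy{\tilde{F}}_k = e^{-\imath k \arg(\noisy{F}_1)} \noisy{F}_k$, we have
\begin{equation*}
|\noisy{F}_k - \noisy{\tilde{F}}_k|^2 = |\noisy{F}_k|^2 \left|1 - e^{-\imath k \arg(\noisy{F}_1)}\right|^2 = 4|\noisy{F}_k|^2 \sin^2\!\left(\tfrac{k}{2}\arg(\noisy{F}_1)\right),
\end{equation*}
so the task reduces to (i) bounding the magnitudes $|\noisy{F}_k|$ and (ii) expressing $\arg(\noisy{F}_1)$ in terms of the noise variables.

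For (i) I would apply the triangle inequality to $\noisy{F}_k = B_k \alpha \sum_n \noisy{w}_n e^{-\imath k \noisy{\theta}_n}$. Since $\noisy{w}_n \ge 0$ and $\alpha > 0$, the normalization $\alpha \sum_n \noisy{w}_n = N$ immediately yields $|\noisy{F}_k| \le B_k N$, producing the $2 B_k N$ prefactor that appears inside the sum.

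For (ii) the central move is to conjugate the noise model: from $\noisy{w}_n e^{\imath \noisy{\theta}_n} = w_n e^{\imath \theta_n} + \epsilon_n$ follows $\noisy{w}_n e^{-\imath \noisy{\theta}_n} = w_n e^{-\imath \theta_n} + \bar{\epsilon}_n$. The hypothesis $\bar{\theta}=0$ forces $\sum_n w_n e^{\imath \theta_n}$, and hence its conjugate $R := \sum_n w_n e^{-\imath \theta_n}$, to be real and positive, with $B_1 R = F_1 = |F_1|$. Since $\alpha B_1 > 0$ (using $H_1 > 0$ for the $\cos^{2K}$ kernel) does not affect phase, we get $\arg(\noisy{F}_1) = \arg(R + \bar{E})$, where $E = \sum_n \epsilon_n$ has independent real and imaginary Gaussian components of variance $\sigma^2$. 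Writing that argument as an arctangent of the ratio of imaginary to real parts and multiplying numerator and denominator by $B_1$ yields exactly $\arctan(B_1 \epsilon / (|F_1| + B_1 \upsilon))$ with $\epsilon, \upsilon \sim \mathcal{N}(0,\sigma^2)$.

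Combining (i) and (ii), squaring and summing over $k$, taking the square root, and finally taking expectation (the inequality holds pointwise in the noise realization, so expectation preserves it) delivers the stated bound. I expect the main obstacle to be the bookkeeping in step (ii): verifying that conjugation correctly converts the $\epsilon_n$ into Gaussian perturbations of the appropriate real and imaginary parts of $R$, and that each of the real-positive scalars $\alpha$, $H_1$, and $R$ drops out of the argument computation cleanly. The trigonometric identity $|1 - e^{\imath \psi}| = 2|\sin(\psi/2)|$ and the triangle-inequality bound on $|\noisy{F}_k|$ are routine by comparison.
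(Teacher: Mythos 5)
Your proposal is correct and follows essentially the same route as the paper's own proof: factor out $|\noisy{F}_k|$, convert $|1-e^{-\imath k\arg(\noisy{F}_1)}|$ to a sine via Euler's formula, bound $|\noisy{F}_k|\le B_k N$ by the triangle inequality, and express $\arg(\noisy{F}_1)$ as an arctangent of the aggregated Gaussian noise over $|F_1|$ plus noise. Your explicit handling of the conjugation of $\epsilon_n$ is a point the paper glosses over, but since the complex Gaussian is circularly symmetric it changes nothing.
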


To make use of Theorem~\ref{thm:canon}, we note that the distance between a canonical distribution and its canonical noisy version is bounded by the distance due to noise and the distance due to canonicalizing the noisy version, i.e.  $||F - \noisy{\tilde{F}}|| \le ||F - \noisy{F}|| + ||\noisy{F} - \noisy{\tilde{F}}||$.
The theorem lets us calculate the expected value of $||\noisy{F} - \noisy{\tilde{F}}||$ only as a function of $|F_1|$ relative to the variance of the noise, $\sigma$, without having to know $f$ or $\noisy{f}$.
Notably, $\ev\left[||\noisy{F} - \noisy{\tilde{F}} || \right]$ approaches zero as $|F_1|$ grows relative to the noise. 
Because the norm is always non-negative, its expected value approaching zero implies that its variance is also approaching zero.
This means that as noise gets smaller, $||F - \noisy{\tilde{F}}|| \approx ||F - \noisy{F}|| \approx 0$, which is what we set out to show.

We illustrate this with a simulation (Figures~\ref{fig:canonBoundGood} and \ref{fig:canonBoundBad}).
We first generate two random distributions, one with a large $|F_1|$ and one with a small $|F_1|$.
For each of these distributions, we generate noisy versions for a range of noise levels and calculate $||F - \noisy{F} ||$ and $||\noisy{F} - \tilde{\noisy{F}} ||$.
Comparing Figures~\ref{fig:canonBoundGood} and \ref{fig:canonBoundBad}, we see that $||\noisy{F} - \tilde{\noisy{F}} ||$ is expected to be smaller in Figure~\ref{fig:canonBoundGood}, where $|F_1|$ is large.
For comparison, we plot the distance caused by rotating these distributions, $||F - F_\phi||$ (Figure~\ref{fig:canonBoundGood}(e) and \ref{fig:canonBoundBad}(e)).
In Figure~\ref{fig:canonBoundGood}, the distance caused by rotation is larger than the distance due to canonicalizing the noisy versions, but in Figure~\ref{fig:canonBoundBad}, the distance due to canonicalization is significant compared to the rotation distance.

\begin{figure}
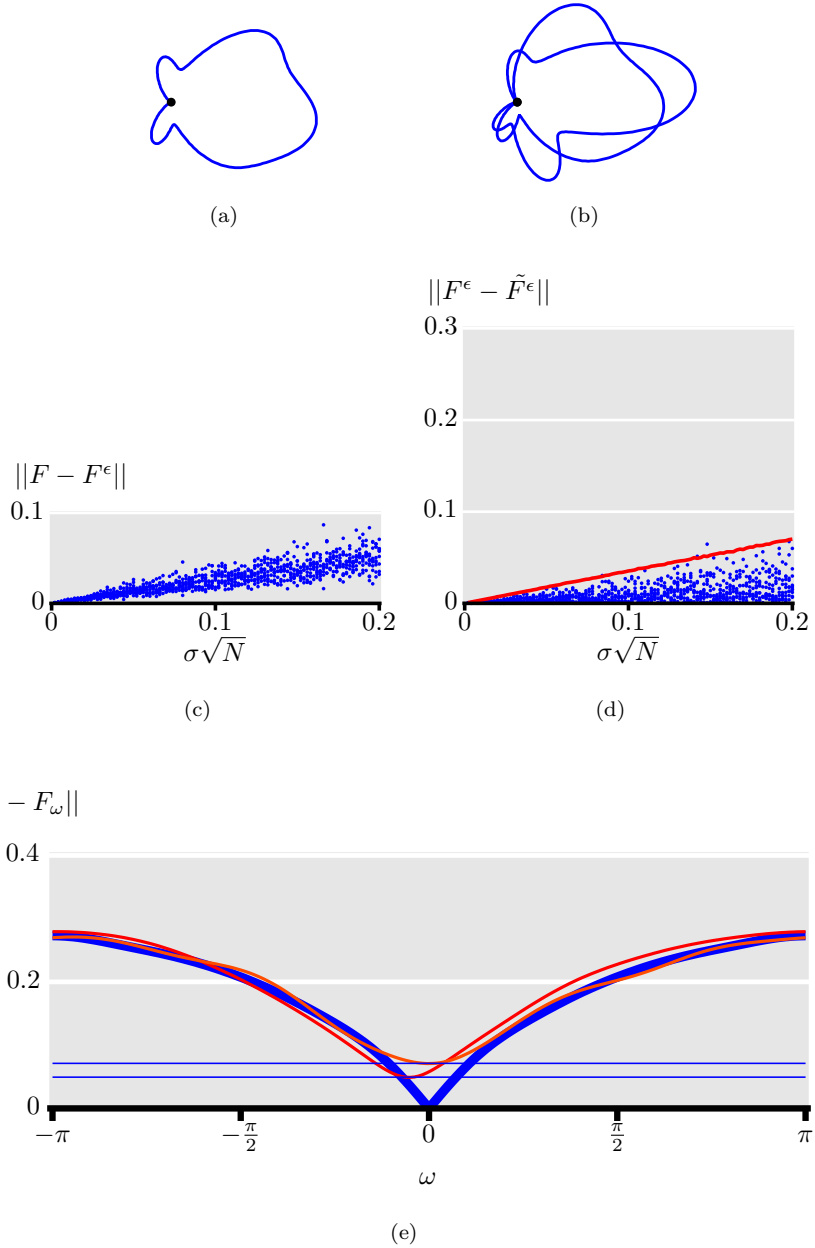

  \centering 

    \subfloat[]{\includegraphics[page=12]{paper-pics_static.pdf}}\hspace{1cm}
    \subfloat[]{\includegraphics[page=13]{paper-pics_static.pdf}}\\			
		\subfloat[]{\includegraphics[page=14]{paper-pics_static.pdf}}%
		\subfloat[]{\includegraphics[page=15]{paper-pics_static.pdf}}\\
  \subfloat[]{\includegraphics[page=16]{paper-pics_static.pdf}}
  \caption{(a) A distribution, $f$.
    (b) Two of its noisy versions, $\noisy{f}$.
    (c) The distance between $f$ and its noisy versions plotted as a function of increasing noise.
    (d) The additional error introduced by $F_1$ canonicalization of the noisy versions, along with the expectation from Theorem~\ref{thm:canon}.
    Because this distribution has a large $|F_1|$ value compared to the noise, $F_1$ canonicalization makes small changes to noisy versions of $f$.
    (e) Curves indicate $||F - F_{\phi}||$ (bold line) and $||F - \noisy{F}_{\phi}||$ (thin lines) for  $f$, two noisy versions of $f$, and their rotations by $\phi$.
    Because $|F_1|$ is large compared to the noise, $||F - \tilde{\noisy{F_\phi}}||$ (horizontal lines) is almost as small as $\min_\phi  ||F - \noisy{F}_{\phi}||$.}
  \label{fig:canonBoundGood}
\end{figure}

\begin{figure}
    \centering
    \subfloat[]{\includegraphics[page=17]{paper-pics_static.pdf}}\hspace{1cm}
    \subfloat[]{\includegraphics[page=18]{paper-pics_static.pdf}}\\			
		\subfloat[]{\includegraphics[page=19]{paper-pics_static.pdf}}%
		\subfloat[]{\includegraphics[page=20]{paper-pics_static.pdf}}\\
  \subfloat[]{\includegraphics[page=21]{paper-pics_static.pdf}}
  \caption{(a) A distribution, $f$.
    (b) Two of its noisy versions, $\noisy{f}$.
    (c) The distance between $f$ and its noisy versions plotted as a function of increasing noise.
    (d) The additional error introduced by $F_1$ canonicalization of the noisy versions, along with the expectation from Theorem~\ref{thm:canon}.
    Because this distribution has a small $|F_1|$ value compared to the noise, $F_1$ canonicalization may make large changes to noisy versions of $f$.
    (e) Curves indicate $||F - F_{\phi}||$ (bold line) and $||F - \noisy{F}_{\phi}||$ (thin lines) for $f$, two noisy versions of $f$, and their rotations by $\phi$.
    Because $|F_1|$ is small compared to the noise, $||F - \tilde{\noisy{F_\phi}}||$ (horizontal lines) is sometimes large.}
  \label{fig:canonBoundBad}
\end{figure}

As a concrete example, take a patch matching application, such as we describe in Section~\ref{sec:ex_patch}.
If distributions in a dataset are randomly rotated and $|F_1|$ for each patch is large relative to the expected noise, it makes sense to canonicalize the distributions before matching because much of the distance between corresponding patches will come from their rotation, which canonicalization will remove.
If, on the other hand, patches in the dataset are not rotated, canonicalization will hurt performance because $||\noisy{F} - \tilde{\noisy{F}}||$ will increase the distance between matching patches.
As $|F_1|$ for the patches shrinks relative to the noise, canonicalization becomes increasingly unstable.
This is because when $|F_1|$ is small, a small amount of noise can greatly affect $\arg (F_1)$.
This situation can arise in two ways.
The first is when all weights are small, meaning the distribution being calculated is essentially zero; instability is no problem in this case because rotation has no effect on FS-KDEs that are nearly zero.
The second is when the distribution has symmetry, e.g., when $\Theta$ contains only angles only at zero and $\pi$.
Such cases may arise in practice, leading us to explore a generalization of $F_1$ canonicalization that can remove these symmetries. 

\subsection{$F_k$ Canonicalization}
We can generalize the idea in \eqref{eq:canon} to rotating $f$ by an angle, $(\arg(F_{\ell}) + 2\pi n) / \ell$, such that $F_\ell$ is real.
The added complexity is that for $\ell \ge 2$, this angle is not unique; it can take $\ell$ different values.
We disambiguate these by defining $F_k$ canonicalization recursively,
\begin{equation*}
\tilde{f}^\ell(\theta) \overset{FS}{\longleftrightarrow} \tilde{F}_k^\ell = e^{-j k \arg(F_{\ell}) / \ell} \tilde{F}_k^{\ell-1},
\end{equation*}
with $\tilde{F}_k^1  = \tilde{F}_k$ as defined in \eqref{eq:canon}.
One way to think about this process is that we first $F_1$ canonicalize, then we pick the smallest rotation that makes $F_2$ real, then pick the smallest rotation that makes $F_3$ real, and so on until $F_\ell$.
For any choice, $1 \le \ell \le K$, we can show that rotating the input set of angles does not affect the canonical version, using the same steps as for $F_1$ canonicalization.

The benefit of using $\ell > 1$ is that for angular distributions with a certain kind of symmetry, $|F_1|$ may be small (and thus $F_1$ canonicalization will not be robust to noise), while, e.g., $|F_2|$ may be large, meaning $F_2$ canonicalization will be robust to noise.
The trade-off is that if $|F_1|$ and $|F_2|$ are of similar size, $F_1$ canonicalization will be more robust to noise.
(To see this, note that $F_2$ canonicalization is just another mean subtraction, except that the mean is calculated by first doubling all the angles in $\Theta$.
This doubling can remove unwanted symmetry, but it also amplifies noise.)

In our experiments, we leverage this in the following way:
When what is important is pairwise distances between FS-KDEs, then we can define a canonical distance,
\begin{equation*}
||f - g||_{\text{canonical}} = 2\pi \min_{\phi} ||F - e^{-j k \phi }G||.
\end{equation*}
Finding this distance requires an optimization over $\phi$, so is not appropriate when many pairwise distances must be computed.
A reasonable approximation, however, is
\begin{equation}
  ||f - g||_{F_k \text{ canonical}} = \min_{1 \le \ell \le K} ||\tilde{f}^\ell - \tilde{g}^\ell||,
  \label{eq:canon_dist_all}
\end{equation}
which only requires the calculation of $K$ distances.


\section{Experiments and Discussion}
\label{sec:experiments}
We now present experiments in keypoint description, person detection, and texture segmentation that show the promise of FS-KDEs using the $\cos^{2K}$ kernel as a tool in image processing.

\subsection{Keypoint Description}
\label{sec:ex_patch}
A typical approach to image registration involves selecting keypoints from the images to be matched, finding pairs of corresponding keypoints, and solving for the transform based on the location of these pairs.
One way to find correspondences between keypoints is to calculate a keypoint descriptor from the pixels around each keypoint.
When two keypoints correspond, the distance between their descriptors should be low; when they do not, it should be high.
A good keypoint descriptor should be highly discriminative while simultaneously being invariant to the transform that the registration aims to reverse.

We evaluate the performance of the $\cos^{2K}$ kernel  as a keypoint descriptor using the University of British Columbia Multi-view Stereo Correspondence Dataset~\cite{WinderHB:09}.
This dataset was constructed by extracting image patches around difference of Gaussian interest points in many images of the same few scenes (Statue of Liberty, Yosemite National Park, and Notre-Dame Cathedral).
Depth maps of the scenes were used to determine which interest points match in 3D space, resulting in lists of corresponding and non-corresponding image patches (see Figure~\ref{fig:exPatches}).
The patches are $64 \times 64$ greyscale images; in our experiments we crop them to a circular region with a diameter of 60 pixels to avoid artifacts when rotating the patches.

\begin{figure}
\centering
\subfloat[Corresponding]{\includegraphics[width=.45\columnwidth]{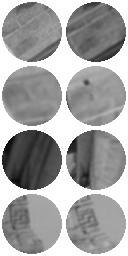}} \quad
\subfloat[Non-corresponding]{\includegraphics[width=.45\columnwidth]{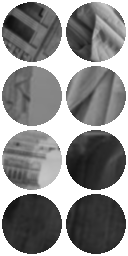}}
\caption{Examples of corresponding (left) and non-corresponding (right) pairs of patches from the British Columbia Multi-view Stereo Correspondence Dataset~\cite{WinderHB:09}.
Though the patches are rotated to a canonical orientation, corresponding patches still exhibit viewpoint and intensity variation.
}
\label{fig:exPatches}
\end{figure}

We compare three simple keypoint descriptors: \begin{inparaenum}[(i)]
\item The \emph{raw intensity} descriptor is formed by concatenating the pixel values of the patch into a vector.
It has dimension equal to the number of pixels in the patch, 2,828.
\item The \emph{gradient histogram} descriptor is formed by computing the image gradient at each pixel in the patch and forming a histogram of the gradient angles weighted by the norm of the gradient.
The dimension of the descriptor is equal to the number of histogram bins; we vary it between 4 and 32.
\item The \emph{$\cos^{2K}$ } descriptor is formed by computing the image gradient at each pixel and computing a $\cos^{2K}$ FS-KDE of the gradient angles weighted by the norm of the gradient. 
We truncate the descriptors according to \eqref{eq:trunc} with $\epsilon$ = $1 \times 10^{-5}$ and we vary the descriptor length between 4 and 32. \end{inparaenum}

Additionally, we compare three canonical versions of these descriptors: \begin{inparaenum}[(i)]
\item The \emph{canonical gradient histogram}, which is the same as the gradient histogram but with its bins rotated so that the first bin has the largest value.
\item The \emph{$F_1$ canonical $\cos^{2K}$}, which follows the canonicalization procedure from \eqref{eq:canon}.
\item The \emph{$F_{k}$ canonical $\cos^{2K}$}, which follows the canonicalization procedure from \eqref{eq:canon_dist_all}.
\end{inparaenum}

We use each of these methods to compute descriptors for 250,000 pairs of corresponding patches and 250,000 pairs of non-corresponding patches.
We then calculate the Euclidean distance between each pair of descriptors;
good descriptors should assign small distances to corresponding patches and large distances to non-corresponding patches.
Setting a threshold on this distance allows the descriptor to classify pairs of patches as corresponding or not.
To quantify the performance of each descriptor, we calculate the area under its ROC curve (the curve formed when plotting true positive rate versus false positive rate over the whole range of possible threshold values).
A perfect classifier has an area under ROC (AUC) of 1, while a random classifier has an an AUC of .5.


Figure~\ref{fig:patchError} shows the results of our comparison of keypoint descriptors in terms of AUC.
The $\cos^{2K}$ descriptor has the highest AUC for a wide range of descriptor sizes (6 to 26) and has the highest overall AUC of .83 at descriptor length 10).
After this peak at descriptor length ten, its performance declines as the descriptor length increases, which is consistent with the idea that the descriptor becomes overly specific when it is long, increasing distances between corresponding patches.
The error chance of the gradient histogram descriptor also increases with descriptor size, for the same reason.
We attribute the gap in performance between the $\cos^{2K}$ and gradient histogram descriptors to the $\cos^{2K}$ kernel's smooth handling of small rotations:
even though the patches in the dataset are rotated to a canonical orientation, small rotations do exist between corresponding patches, which could distort the gradient histograms (see Figure~\ref{fig:exPatches} for examples).
The canonical versions of the $\cos^{2K}$ and histogram descriptors perform generally worse than their non-canonical counterparts, which is as expected since no canonicalization should be necessary for this dataset.
In this case, canonicalization will decrease the distance between non-corresponding patches more than it does for corresponding patches, increasing the number of false positives at a given threshold.


\begin{figure}[htb]
\centering
\includegraphics[page=22]{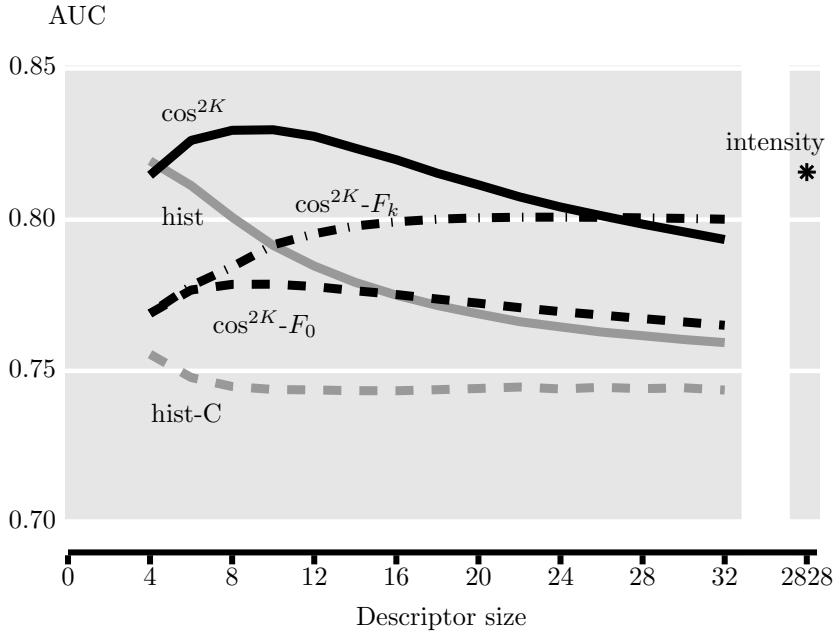}
\caption{AUC for intensity, gradient histogram, and the proposed $\cos^{2K}$ descriptors of varying size.
The best performance is achieved by the $\cos^{2K}$ descriptor of size ten.
The canonical descriptors perform poorly in this experiment because the patches are already in a canonical orientation.}
\label{fig:patchError}
\end{figure}

In a separate experiment, we randomly rotated each patch in the dataset and ran the same comparison (Figure~\ref{fig:patchErrorRot}).
As expected, this greatly increased the error rate for the histogram and $\cos^{2K}$ descriptors, as they are not rotation invariant without canonicalization.
The canonical versions of these descriptors were mostly unaffected by the change, since they are rotation invariant descriptors.
Both canonical versions of the $\cos^{2K}$ descriptor were superior to the canonical gradient histogram, which we hypothesize is due to the robustness to noise of the proposed canonicalizations.
The $F_{k}$ canonicalization was better than the $F_1$ canonicalization, which suggests that symmetries of the type discussed in Section~\ref{sec:canonicalization}, which break the $F_1$ canonicalization, do exist in this dataset.

\begin{figure}[htb]
\centering
 \includegraphics[page=23]{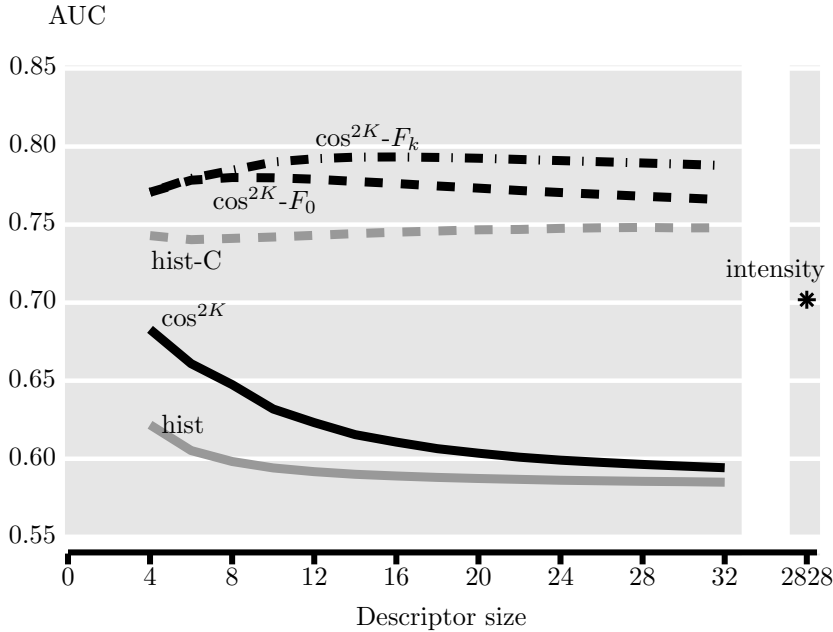}
\caption{AUC for intensity, gradient histogram, and the proposed $\cos^{2K}$ descriptors of varying size on patches with random orientation.
The $\cos^{2K}$ FS-KDE and gradient histogram descriptors cannot handle patch rotations, so have much higher error chance here than in Figure~\ref{fig:patchError}.
The best performance is now achieved by the $\cos^{2K}$ FS-KDE descriptor with $F_{k}$ canonicalization.}
\label{fig:patchErrorRot}
\end{figure}


\subsection{Person Detection}
We now evaluate the usefulness of the FS-KDE using the $\cos^{2K}$ kernel as a feature in a person detection application.
A typical approach to person detection (or, in general, object detection), is to train a classifier on features which consist of distributions of angles.
To preserve some spatial information, these distributions are calculated in a few windows of the input, e.g., upper left, upper right, lower left, lower right, and then concatenated together to form the final feature vector.

We use the INRIA person dataset~\cite{DalalT:05} to do a comparison of feature detectors for human detection.
This dataset is intended for supervised classification of images as containing a person (positive) and not containing a person (negative).
It includes a training set of 2,416 positive and 1,218 negative images and a testing set of 1,126 positive and 453 negative images.
For all images, we use the center $64 \times 128$ pixels for feature extraction.

We compare the following feature extractors:
\begin{inparaenum}[(i)]
\item \emph{Raw intensity} simply uses the pixels of the image as features and therefore has length 8192.
\item \emph{Gradient histogram} separates the image into $8\times8$-pixel blocks and computes a histogram of gradients inside each block, concatenating these histograms into a feature vector. We vary the number of bins per block from 4 to 64.
\item \emph{HOG}, originally described in \cite{DalalT:05}, also forms gradient histograms from blocks of the input image, but includes an additional block normalization step that can increase the feature's illumination invariance.
We use the implementation in \cite{VedaldiF:08} and vary the number of orientations per block from four to sixty-four.
\item \emph{$\cos^{2K}$}  forms truncated ($\epsilon = 1 \times 10^{-5}$) $\cos^{2K}$ FS-KDEs for $8 \times 8$-pixel blocks of the input. We vary the descriptor length per block from four to sixty-four.
\end{inparaenum}

We use each of these methods to extract features from the training and test sets.
For each set of features, we train a linear SVM classifier (in MATLAB) on the training set, then use it to classify each image in the test set as negative or positive.
In a separate experiment, we create new testing and training datasets by rotating each image between $15$ and $-15$ degrees uniformly at random.%
\footnote{This rotation does not produce edge artifacts because we crop the central portion of a larger image to form the training and testing images.}
This rotation is not enough that canonicalization is necessary, but is meant to test the robustness of the features to small rotations.
We report accuracy, the number of correctly classified images in the testing set divided by the total number of images in the testing set, from both experiments in Figure \ref{fig:personAcc}.


\begin{figure}
\centering
\includegraphics[page=24]{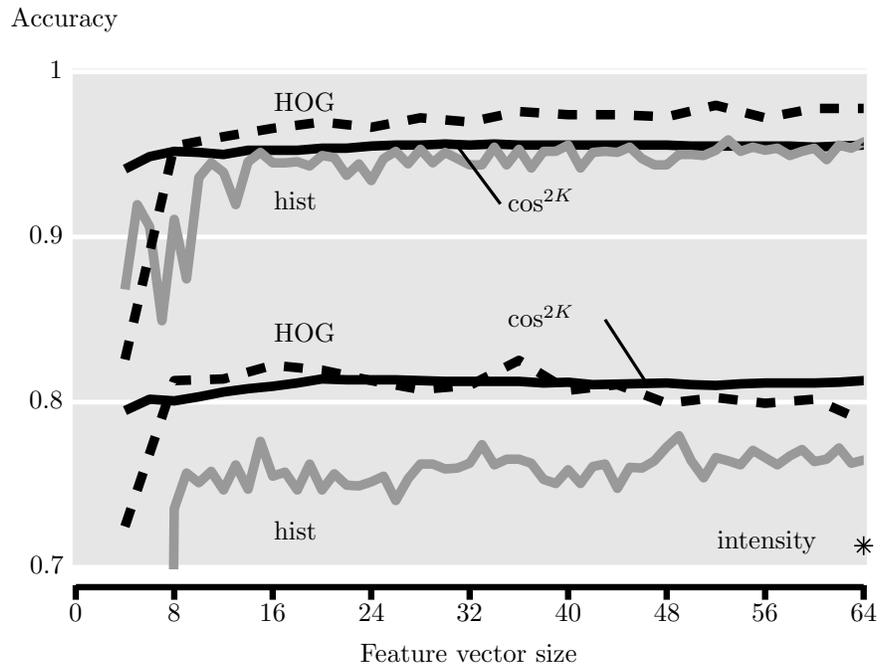}
\caption{Accuracy on a person detection task for the intensity, gradient histogram, HOG, and $\cos^{2K}$ feature extractors, plotted as a function of the feature vector length per image block.
The top set of lines is for the INRIA person dataset~\cite{DalalT:05}, the bottom set is for the same dataset with small random rotations added.
Without rotations, HOG features have the highest accuracy for most lengths and the intensity features (as expected) have the lowest.
With rotations, the performance of all four methods declines, but the $\cos^{2K}$ FS-KDE declines the least, leaving it with accuracy comparable to HOG.
The accuracy of the intensity features for the rotated dataset was below chance and not plotted.
}
\label{fig:personAcc}
\end{figure}


For the unrotated set, the HOG features have the highest accuracy, except when the feature vector size is very small (four numbers per image block, resulting in 512 numbers per image).
We suspect HOG's increase in performance over the gradient histogram and KDE features comes from the normalization scheme used in HOG, which gives it an invariance to illumination changes missing in the other methods.
The low accuracy of the intensity features is as expected, given that greyscale intensity is not a reliable way to distinguish people from background clutter.
The accuracy of the histogram gradient and both KDEs features are similar, except that the accuracy of the histogram features is less stable as the feature vector size changes.
We attribute this to the binning effects introduced by the histogram.
We also note that the decline in performance as descriptor length increases seen in Figure~\ref{fig:patchError} is not evident here because we use an SVM as opposed to simply calculating distances.

When a small amount ($\pm$ 15 degrees) of rotation is added to the images in the dataset, the accuracy of all the feature sets decreases, but the decrease is smallest for the $\cos^{2K}$ features.
The rotation makes intensity features worse than chance (not plotted in Figure~\ref{fig:personAcc}) because these features have no invariance to rotation.
We suspect that binning artifacts (as discussed in Figure~\ref{fig:binning}) explain the relatively larger decrease in accuracy for the gradient histogram and HOG features, because they both rely on gradient histograms.
This experiment shows that the smooth $\cos^{2K}$ kernel provides greater invariance to small rotations than the binning employed by histograms, resulting in higher accuracy in the person detection task.


\subsection{Texture Segmentation}
Distributions of angles are also useful as texture features.
In our previous work, \cite{McCannMFCOK:14}, we presented an algorithm for segmentation based on unmixing the local color histograms of an input image, which we call the Occlusion of Random Textures SEGmenter, (ORTSEG).
In this experiment, we extend ORTSEG to include distributions of angles as well.
We compare the effectiveness of histograms versus $\cos^{2K}$ FS-KDEs to capture these distributions of angles.

We compare the methods on the \emph{random texture dataset} from \cite{McCannMFCOK:14} plus an additional synthetic dataset, which we refer to as \emph{dead leaves}.
The images in the random texture dataset set each comprise three textures with different color distributions and no meaningful edge information (see \cite{McCannMFCOK:14} for more details and examples).
The dead leaves dataset (Figure~\ref{fig:deadLeavesEx}) images each comprise three textures with the same color distributions but differently oriented edges.
To create this dataset, we first pick three seed locations at random and use them to partition the image into the three Voronoi regions.
We then generate the image via a dead leaves procedure: we sequentially place shapes of random color into the image at random locations until every pixel is covered.
Depending on which of the three regions a shape lands in, it is selected to be either a vertical bar, horizontal bar, or diagonal bar.
We select the ground truth label of each pixel to correspond to the shape that covered it most recently.

\begin{figure}[htb]
  \centering
  \subfloat[Input]{\includegraphics[width=.45\columnwidth]{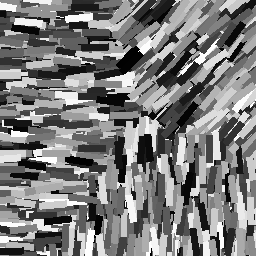}}
  \quad
  \subfloat[Ground truth]{\includegraphics[width=.45\columnwidth]{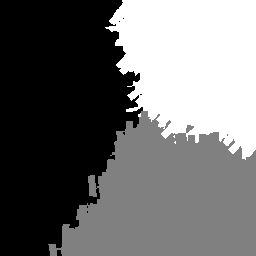}}
  \caption{An example image and corresponding ground truth from the dead leaves dataset.
  For images like these, angular distributions are an important feature.}
  \label{fig:deadLeavesEx}
\end{figure}

We compare three methods on this dataset.
\begin{inparaenum}[(i)]
\item \emph{ORTSEG} is the original segmentation system described in \cite{McCannMFCOK:14}, which relies only on color histograms.
\item \emph{ORTSEG-hist} uses both color histograms and local gradient histograms, with the number of bins selected from training between eight and 40.
\item \emph{ORTSEG-FS-KDE} uses color histograms and local $\cos^{2K}$ FS-KDEs of gradients, with no canonicalization, and with the number of complex coefficients selected from training between four and 20.
\end{inparaenum}
We do not evaluate canonical versions of these methods because canonicalization will make the angular distributions in the different texture regions of the dead leaves images match, resulting in low segmentation accuracy.
The choice of whether or not to canonicalize for segmentation depends on whether textures that match except for their orientation are intended to be grouped together or not.

The experiment is structured exactly as in \cite{McCannMFCOK:14};
in short, it is a leave-one-out cross validation.
The results are reported in terms of Rand index~\cite{Rand:71,UnnikrishnanPH:07}, which measures the fraction of pairs of pixels that are either in the same region in both the segmentation result and ground truth or in different regions in both the segmentation result and ground truth.
It therefore ranges between zero and one, with one being perfect agreement with the ground truth.

The results of the segmentation experiment are given in Table~\ref{tab:segResults}.
The three methods perform equally well on the random texture dataset, which makes sense because color information alone is enough to distinguish the textures.
On the dead leaves dataset, the basic ORTSEG method, which relies only on color, cannot distinguish the textures at all and thus performs poorly.
The gradient histogram and FS-KDE versions of ORTSEG improve performance by including edge information.
That performance increase is most pronounced for the FS-KDE version.
We attribute this difference to the smooth $\cos^{2K}$ kernel used in the FS-KDE giving better robustness to small variations in gradient angle as compared to histograms.
These results serve as a proof of concept for the efficacy of the $\cos^{2K}$ FS-KDE for including gradient information into the segmentation method ORTSEG.

\begin{table}
\renewcommand{\arraystretch}{1.1}
\centering
\caption{Comparison of the basic ORTSEG method with versions using gradient histograms and $\cos^{2K}$ FS-KDEs.
The augmented versions improve the performance on the dead leaves dataset, where edge information is critical.}
\begin{tabular}{l c c}
\toprule
& \multicolumn{2}{c}{\bf Dataset} \\
\cmidrule{2-3}
\bf Method & random texture & dead leaves\\
basic & {\bf 0.989 }$\pm$ 0.002 & 0.551 $\pm$ 0.087\\
hist & 0.988 $\pm$ 0.002 & 0.702 $\pm$ 0.161\\
FS-KDE & 0.989 $\pm$ 0.002 & {\bf 0.907 }$\pm$ 0.094\\
\bottomrule
\end{tabular}
\label{tab:segResults}
\end{table}

\section{Conclusion}
\label{sec:conc}
In this work, we presented a new bandlimited Gaussian-like kernel, useful for describing angular distributions in computer vision applications.
Because the kernel is bandlimited, the resulting KDEs are also bandlimited and therefore can be represented exactly by a finite number of their Fourier series coefficients, a technique which we call FS-KDE.
Though this type of density estimation is not new, it has not been much explored in image processing, where finite-length angular descriptors are very useful.
We also presented a canonicalization scheme for FS-KDEs which allows them to create rotation invariant descriptions of angular distributions and analyzed the robustness of this scheme to noise.

In our experiments, we compared FS-KDEs using our proposed kernel to histograms in the contexts of patch matching, person detection, and texture segmentation.
In the patch matching experiment, the FS-KDE descriptors outperformed histogram-based descriptors both when patches were upright and when they were randomly oriented.
The person detection experiment showed that FS-KDE features provide higher person detection accuracy than histogram features, especially when a small amount of random rotation was added to the dataset.
Finally, the segmentation experiment suggested that the FS-KDE is a better way to capture texture information than histograms in the context of texture segmentation. 
Taken together, these experiments provide strong proof of concept for the efficacy of FS-KDEs using our new bandlimited kernel as tools for describing distributions of angles in image processing applications.

\section{Acknowledgements}
The authors gratefully acknowledge support from the NSF through awards 0946825 and 1017278,
the Achievement Rewards for College Scientists Foundation Scholarship,
the John and Claire Bertucci Graduate Fellowship,
the Philip and Marsha Dowd Teaching Fellowship, and the CMU Carnegie
Institute of Technology Infrastructure Award.
\appendix

\section{Proof of Theorem~\ref{thm:canon}}
\label{app:proof}

\begin{proof}
\begin{align*}
  || \noisy{F} - \noisy{\tilde{F}} || &\overset{(a)}{=} \sqrt{ \sum_{k=-K}^K \left| \noisy{F_k} - e^{-\imath k \arg( \noisy{F}_1 )} \noisy{F_k} \right|^2 } \\ 
  &\overset{(b)}{=} \sqrt{ \sum_{k=-K}^K \left( |\noisy{F_k}| \left| 1 - e^{-\imath k \arg( \noisy{F}_1 ) }  \right| \right)^2 } \\ 
  &\overset{(c)}{=} \sqrt{ \sum_{k=-K}^K \left( |\noisy{F_k}| 2 \sin \left( \frac{k}{2} \arg(\noisy{F_1}) \right)  \right)^2 }, \numberthis \label{eq:p1} 
\end{align*}
where (a) follows from the definition of norm, (b) from factoring, and (c) from Euler's formula and the fact that $|e^{\imath \theta} x| = |x|$ for all $x$.

In order to find $\arg(\noisy{F_1})$ in \eqref{eq:p1}, we note that $\arg(\noisy{F_1}) = \arg(\noisy{F_1}/\alpha)$ and that
\begin{equation*}
  \noisy{F_1}/\alpha = B_1  \sum_{n=0}^{N-1}  w_n e^{ -\imath  \theta_n } + \epsilon_n = F_1 + B_1  \sum_{n=0}^{N-1} \epsilon_n
\end{equation*}
by assumption and the definition of the FS-KDE~\eqref{eq:FS-KDE_FK}.
Therefore
\begin{align*}
  \arg( \noisy{F_1} ) &= \arctan \left( \frac{\mathfrak{I}( \noisy{F_1}/\alpha ) }{ \mathfrak{R}( \noisy{F_1}/\alpha ) } \right) \\
                      &= \arctan \left( \frac{ B_1  \sum_{n=0}^{N-1} \mathfrak{I} ( \epsilon_n ) }{ |F_1| + B_1  \sum_{n=0}^{N-1} \mathfrak{R} ( \epsilon_n ) } \right), \numberthis \label{eq:p2}
\end{align*}
where $ \mathfrak{R}(z)$ and $ \mathfrak{I}$ denote the real and imaginary parts of $z$, respectively, because $F_1$ is real by assumption.

To bound $|\noisy{F_k}|$, we use the fact that $|B_k \sum_{n=0}^{N-1} w_n e^{ -\imath k  \theta_n }| \le |B_k \sum_{n=0}^{N-1} w_n|$ for any choice of $(\Theta, W)$, meaning that
\begin{equation*}
  |\noisy{F_k}| \le B_k N.
\end{equation*}
To finish the proof, we replace the sums in \eqref{eq:p2} with new random variables $\epsilon$ and $\upsilon$, the distributions of which we know because of the noise model assumed in the proof.

\end{proof}

\bibliographystyle{IEEEtran}
\bibliography{paper}  

\end{document}